\title{\LARGE \bf
On feedforward control using physics--guided neural networks:\\ Training cost regularization and optimized initialization
}
\author{Max Bolderman$^{1}$, Mircea Lazar$^{1}$, and Hans Butler$^{1,2}$% <-this % stops a space
\thanks{\begin{flushright}
\begin{minipage}[r]{0.04\textwidth}
    \includegraphics[width=0.9\linewidth]{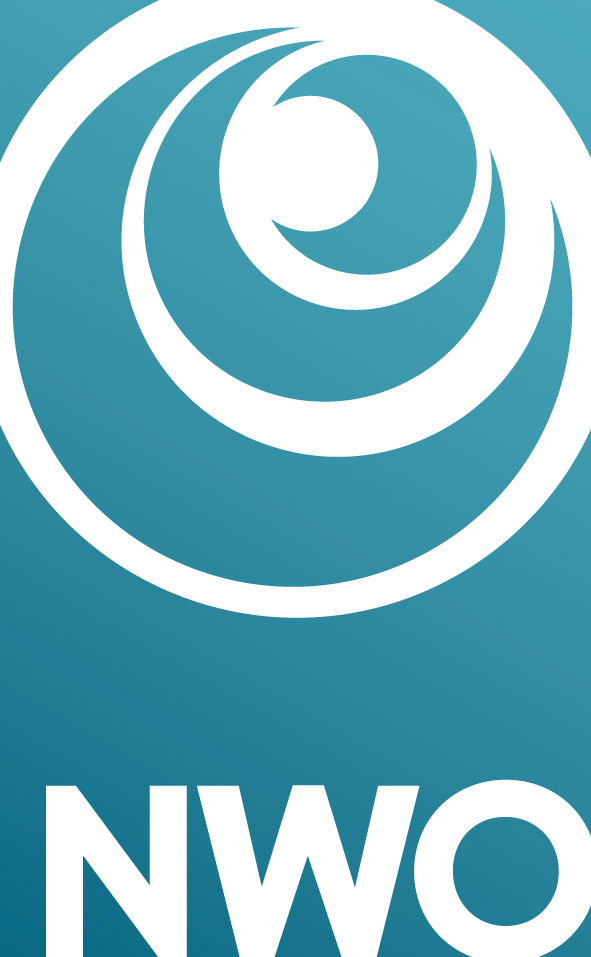}
\end{minipage}
\end{flushright}
\vspace{-1.05cm}
\begin{minipage}[l]{0.43\textwidth}
\hspace{0.5em} *This work is part of the research programme 9654 with project number 17973, which is (partly) financed by the Dutch Research Council (NWO). 
\end{minipage}}% <-this % stops a space
\thanks{$^{1}$Control Systems Group, department of Electrical Engineering, Eindhoven University of Technology, The Netherlands.}%
\thanks{$^{2}$ASML, Veldhoven, The Netherlands.}%
\thanks{Email: {\tt\small m.bolderman@tue.nl, m.lazar@tue.nl, and h.butler@asml.nl} }
}
\begin{document}
\newtheorem{thm}{Definition}[section]
\newtheorem{proposition}[thm]{Proposition}
\newtheorem{lemma}[thm]{Lemma}
\newtheorem{remark}{Remark}[section]
\newtheorem{definition}{Definition}
\newtheorem{assumption}{Assumption}[section]

\maketitle
\thispagestyle{empty}
\pagestyle{empty}

%%%%%%%%%%%%%%%%%%%%%%%%%%%%%%%%%%%%%%%%%%%%%%%%%%%%%%%%%%%%%%%%%%%%%%%%%%%%%%%%
\begin{abstract}
Performance of model--based feedforward controllers is typically limited by the accuracy of the inverse system dynamics model. 
Physics--guided neural networks (PGNN), where a known physical model cooperates in parallel with a neural network, were recently proposed as a method to achieve high accuracy of the identified inverse dynamics.
However, the flexible nature of neural networks can create overparameterization when employed in parallel with a physical model, which results in a parameter drift during training.
This drift may result in parameters of the physical model not corresponding to their physical values, which increases vulnerability of the PGNN to operating conditions not present in the training data. To address this problem, this paper proposes a regularization method via identified physical parameters, in combination with an optimized training initialization that improves training convergence. 
The regularized PGNN framework is validated on a real--life industrial linear motor, where it delivers better tracking accuracy and extrapolation. 
\end{abstract}
\begin{keywords}
  Neural networks, Feedforward control, Nonlinear system identification, Motion control, Linear motors.%
\end{keywords}

%%%%%%%%%%%%%%%%%%%%%%%%%%%%%%%%%%%%%%%%%%%%%%%%%%%%%%%%%%%%%%%%%%%%%%%%%%%%%%%%
\section{Introduction}
Inversion--based feedforward is a control method that improves reference tracking of dynamical systems by generating a control input based on a model of the inverse system dynamics, see, for example, \cite{Boerlage2003, Butterworth2012, Yuen2019, Zundert2018, Igarashi2021}. Typically, linear or linear--in--the parameters physical models are used to parametrize the inverse system model \cite{Steinbuch2000, Yuen2019}. These types of model sets, however, are not able to approximate parasitic nonlinear forces that are always present in mechatronics, which significantly limits the achievable performance \cite{Devasia2002}.

With the aim to approximate and compensate for nonlinear effects, neural networks (NN) have been originally proposed in \cite{Sorensen1999} as inverse system parametrizations. 
NNs are known as universal function approximators \cite{Hornik1989, Cybenko1989}, and are therefore considered capable of identifying the complete inverse dynamics under suitable assumptions. 
Although, there exists some examples of successful implementation of NNs in feedforward control \cite{Otten1997, Ren2009, Li2014}, the lack of physical interpretability and poor extrapolation capabilities \cite{Haley1992} hinders adoption of NN--based controllers in industry. 

To combine the benefits of physical models and neural networks, \cite{Bolderman2021} introduced physics--guided neural networks (PGNN) for feedforward control. The PGNN in \cite{Bolderman2021} employs a parallel model structure that combines the output of a known, not necessarily linear, physics--guided part, with the output of an unknown, possibly nonlinear part that is parameterized by a black--box NN. Both the physics--guided and the NN part are embedded in a single model structure, which allows the PGNN to be identified as a single model. 
The considered model structure is in line with earlier parallel model structures utilized in nonlinear system identification, see, e.g., \cite{Nelles2001, Yasui1996, Zabiri2011}, which typically combine a nonlinear/NN model with a linear model. In \cite{Nelles2001}, two standard approaches for identifying the parameters of such hybrid parallel models are distinguished:
\begin{enumerate}
	\item Simultaneous identification of all parameters;
	\item Sequential (or consecutive) identification: first indentify the parameters of the physics--guided layer and then train the NN using the residuals of the physcs--guided layer. 
\end{enumerate}
In principle, the first approach yields the best data fit and it was also used in \cite{Bolderman2021} to train a feedforward controller. However, this approach can suffer from overaparameterization caused by the flexible nature of the NN that is capable of identifying also parts of the parallel physical model. This creates a parameter drift during training that may yield physically inconsistent parameter values for the physics--guided layer and poor extrapolation capabilities caused by the large NN outputs \cite{Haley1992}. The second approach, used for example in \cite{Yasui1996, Zabiri2011}, preserves the physical model parameters in the physics--guided layer, which can be beneficial for extrapolation outside the training data set. However, approximation accuracy can be reduced because the training has less free parameters to optimize simultaneously. The sequential identification was shown in \cite{Nelles2001} to act as a regularization. 

In this paper we propose the following contributions to the state of the art described above. First, we define a regularized cost function which penalizes, besides the standard data fit, the deviation of the parameters of the physics--guided layer with respect to some known/identified physical parameters. 
The known physical parameters can be obtained based on physical insights or via identification using a linear--in--the--parameters model (LIP). 
Second, we develop a novel initialization procedure for training of parallel PGNNs and we prove that it guarantees an improvement in accuracy of the data fitting with respect to the original LIP model. This is in line with recent initialization procedures for nonlinear system identification based on the best linear approximator, see \cite{Schoukens2020} and the references therein. Experimental results on a coreless linear motor used in lithography industry confirm that the developed regularized training cost function and initialization procedure lead to a high performance and improved robustness to non--training data. 

\begin{remark}
	An alternative regularization method has been proposed in the physics--informed neural network (PINN) literature, see, e.g., \cite{Karpatne2017, Karpatne2019, Karniadakis2019}. 
	These PINNs employ a black--box NN to identify an underlying function based on a cost function that penalizes, besides data fit, compliance of the NN output with the output of an available physical model. 
	The penalty of deviating from the physical model requires the NN to optimize between either data fit or model fit, which results in a biased estimate when the underlying system dynamics deviates from the available physical model. 
	Note that our PGNN design \cite{Bolderman2021} differs by having the physical model as an intrinsic part of the PGNN, whereas the PINNs use the output of physical model only for regularization, i.e. the physical model itself is not embedded in the resulting PINN. 
\end{remark}

The remainder of this paper is organized as follows:
Section~\ref{sec:Preliminaries} introduces the considered system dynamics and PGNN identification setup. The problem statement is formalized in Section~\ref{sec:ProblemStatement}. 
Section~\ref{sec:Regularization} describes the regularization--based PGNN cost function, and Section~\ref{sec:Initialization} explains the PGNN training initialization. 
Afterwards, the developed methodology is tested on an industrial coreless linear motor in Section~\ref{sec:Example}, followed by conclusions in Section~\ref{sec:Conclusion}.

\section{Preliminaries}
\label{sec:Preliminaries}
The symbols $\mathbb{Z}$ and $\mathbb{R}$ denote the set of integers and real numbers, respectively. 
The set $\mathbb{Z}_+ := \{ i \in \mathbb{Z} \mid i > 0 \}$ denotes the positive integers, and the same notation is used for positive real numbers, i.e., $\mathbb{R}_+ := \{i \in \mathbb{R} \mid i > 0 \}$.

\subsection{Inverse system dynamics and modelling}
Consider the following discrete--time inverse dynamical system
\begin{equation}
\label{eq:InverseDynamics}
	u(t) = \theta_0^T T_{\textup{phy}} \big( \phi(t) \big) + g \big( \phi(t) \big), 
\end{equation}
which describes the relation between the input $u(t)$ and the output $y(t)$, that corresponds to a nonlinear ARX model with a particular structure that is clarified as follows.
In~\eqref{eq:InverseDynamics}, $\phi (t) = [y(t+n_a), \hdots, y(t-n_b), u(t-1), \hdots, u(t-n_b) ]^T$ is the regressor, with $n_a, n_b, n_c \in \mathbb{Z}_+$ the orders of the system, and $\theta_0 \in \mathbb{R}^{n_{\theta_0}}$ is the parameter vector corresponding to the known physical model. 
Furthermore, $T_{\textup{phy}} : \mathbb{R}^{n_a+n_b+n_c+1} \rightarrow \mathbb{R}^{n_{\theta_0}}$ comprises of a set of functions that model the known part of the dynamics, e.g., based on physical insights, and $g: \mathbb{R}^{n_a+n_b+1} \rightarrow \mathbb{R}$ is an unknown nonlinear function that contains the \emph{unknown dynamics}. 

Within this paper, the following model structures are considered for identification of the inverse system dynamics~\eqref{eq:InverseDynamics}. 
\begin{definition}
\label{def:LIP}
	A linear--in--the--parameters (LIP) model is defined as
	\begin{equation}
	\label{eq:LIP}
		\hat{u} \big( \theta_{\textup{LIP}}, \phi(t) \big) = \theta_{\textup{LIP}}^T T_{\textup{phy}} \big( \phi(t) \big),
	\end{equation}
	where $\hat{u} \big( \theta_{\textup{LIP}} , \phi(t) \big)$ is the predicted output of the model, and $\theta_{\textup{LIP}} \in \mathbb{R}^{n_{\theta_0}}$ are the parameters to be identified. 
\end{definition}
\begin{definition}
\label{def:PGNN}
	A physics--guided neural network (PGNN) is defined as
	\begin{equation}
	\label{eq:PGNN}
		\hat{u} \big( \theta_{\textup{PGNN}} , \phi(t) \big) = f_{\textup{NN}} \big( \theta_{\textup{NN}}, \phi(t) \big) + \theta_{\textup{phy}}^T T_{\textup{phy}} \big( \phi(t) \big),
	\end{equation}
	where $\theta_{\textup{PGNN}} = \{ \theta_{\textup{NN}}, \theta_{\textup{phy}} \}$ are the PGNN parameters, with $\theta_{\textup{NN}} = \{ W_1, B_1, \hdots , W_{l+1}, B_{l+1} \}$ the neural network weights and biases for each of the $l \in \mathbb{Z}_+$ hidden layers, and $\theta_{\textup{phy}} \in \mathbb{R}^{n_{\theta_0}}$ are the parameters corresponding to the physics--guided part of the PGNN. 
\end{definition}

\begin{remark}
    The NN output is recursively computed as
    \begin{equation}
        \label{eq:NNOutput}
        f_{\textup{NN}} \big( \theta_{\textup{NN}}, \phi(t) \big) = W_{l+1} \alpha_l \big( \hdots \alpha_1 \big( W_1 \phi(t) + B_1 \big)  \big) + B_{l+1},
    \end{equation}
    where $\alpha_i $ contains the activation functions that are applied element--wise.  
\end{remark}

The purpose of the PGNN model~\eqref{eq:PGNN} is to exploit the NN part to identify the inverse dynamics~\eqref{eq:InverseDynamics} more accurately compared to the LIP model that is derived from physical knowledge~\eqref{eq:LIP}. A visualization of the PGNN~\eqref{eq:PGNN}, see also \cite{Bolderman2021}, is shown in Figure~\ref{fig:PGNN}. 

\begin{figure}
	\centering
	\includegraphics[width=1.0\linewidth]{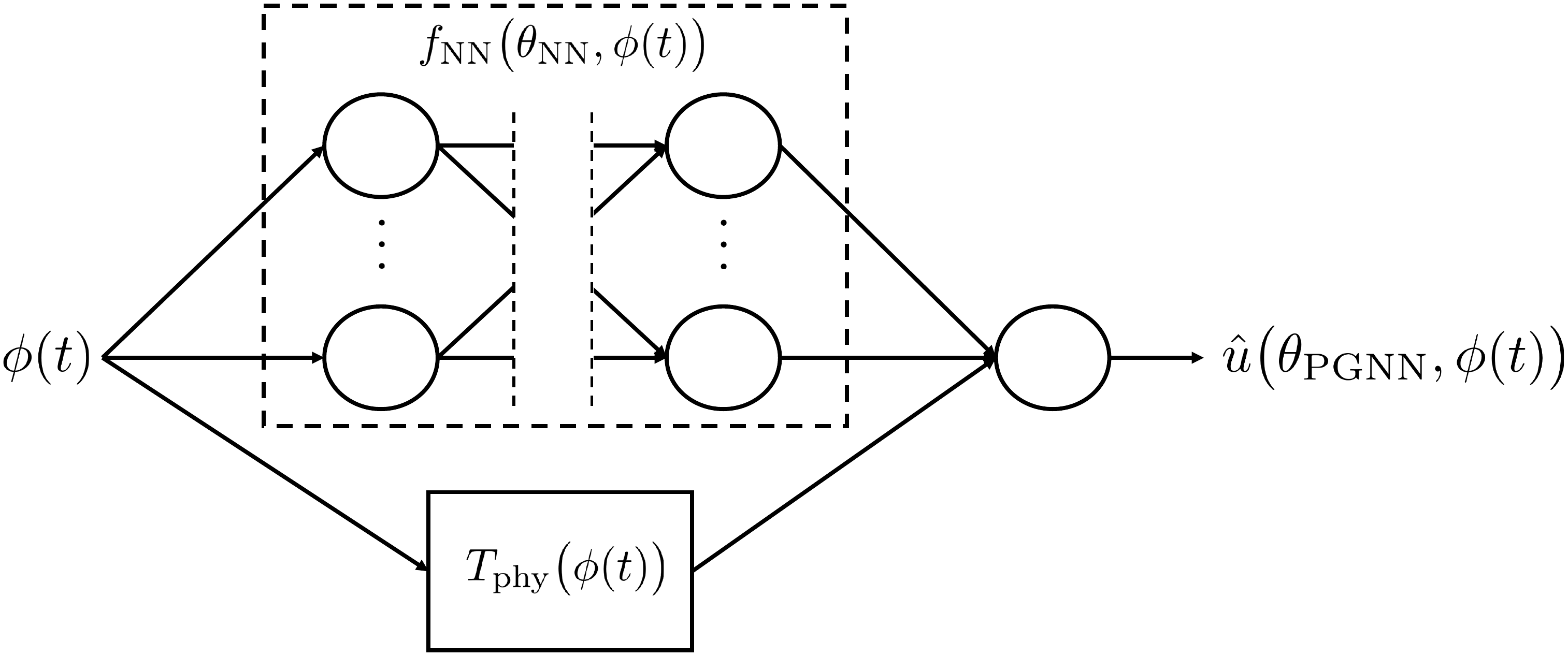}
	\caption{Schematic overview of the considered PGNN structure~\eqref{eq:PGNN}, \cite{Bolderman2021}.}
	\label{fig:PGNN}
\end{figure}

\subsection{Inverse system identification using the LIP model}
Consider that an input--output data set
\begin{equation}
\label{eq:DataSet}
	Z^N = \{ u(0), y(0), \hdots, u(N-1), y(N-1) \},
\end{equation}
is available that is generated by the actual system, i.e., $Z^N$ satisfies the inverse system dynamics~\eqref{eq:InverseDynamics}.

\begin{definition}
\label{def:Uncorrelated}
	Two variables $x_a(t)$ and $x_b(t)$ are uncorrelated if and only if
	\begin{equation}
	\label{eq:Uncorrelated}
		\frac{1}{N} \sum_{t \in Z^N} x_a(t) x_b(t) = 0. 
	\end{equation}
\end{definition}

Let $\theta$ be a general vector of parameters, e.g., $\theta_{\textup{LIP}}$ or $\theta_{\textup{PGNN}}$. 
Then, identification is performed by choosing $\hat{\theta}$ as the minimizing argument of a cost function, i.e.,
\begin{equation}
\label{eq:IdentificationCriterion}
	\hat{\theta} = \textup{arg} \min_{\theta} V \big( \hat{u} \big( \theta, \phi(t) \big), Z^N \big). 
\end{equation}
Typically, the mean--squared error (MSE) 
\begin{equation}
\label{eq:MSE}
	V \big( \hat{u} \big( \theta, \phi(t) \big), Z^N \big) = \frac{1}{N} \sum_{t \in Z^N} \left( u(t) - \hat{u} \big( \theta, \phi(t) \big) \right)^2
\end{equation}
is chosen as a cost function. 
After identification of the parameters $\hat{\theta}$, the corresponding feedforward controller is obtained according to the following definition. 

\begin{definition}
\label{def:FeedforwardController}
    An inversion based feedforward controller is given as
    \begin{equation}
        \label{eq:Feedforward}
        u_{\textup{ff}}(t) = \hat{u} \big( \hat{\theta}, \phi_{\textup{ff}}(t) \big),
    \end{equation}
    with $u_{\textup{ff}}(t)$ the feedforward signal, $\phi_{\textup{ff}} := [r(t+n_a), \hdots, r(t-n_b), u_{\textup{ff}}(t-1), \hdots, u_{\textup{ff}} (t-n_c)]^T$, and $r(t)$ the reference. 
\end{definition}

Substitution of the LIP~\eqref{eq:LIP} in the identification criterion~\eqref{eq:IdentificationCriterion} with MSE cost function~\eqref{eq:MSE}, gives the identified parameters
\begin{align}
\begin{split}
\label{eq:LIPGlobalOptimum}
	\hat{\theta}_{\textup{LIP}} =& M^{-1} \left[ \frac{1}{N} \sum_{t \in Z^N} T_{\textup{phy}} \big( \phi(t) \big) u(t) \right].
\end{split}
\end{align}
where $M := \frac{1}{N} \sum_{t \in Z^N} T_{\textup{phy}} \big( \phi(t) \big) T_{\textup{phy}} \big( \phi(t) \big)^T $. 
The solution~\eqref{eq:LIPGlobalOptimum} is unique if and only if $M$ is non--singular.

\begin{remark}
    In the situation that $T_{\textup{phy}} \big( \phi(t) \big) = \phi(t)$, the identified parameters~\eqref{eq:LIPGlobalOptimum} becomes the best linear approximator (BLA) \cite{Pintelon2012, Schoukens2020}.
\end{remark}

Substitution of the LIP model~\eqref{eq:LIP} with identified parameters~\eqref{eq:LIPGlobalOptimum} into the inverse dynamics~\eqref{eq:InverseDynamics}, gives
\begin{equation}
\label{eq:InverseDynamicsLIP}
	u(t) = \hat{\theta}_{\textup{LIP}}^T T_{\textup{phy}} \big( \phi(t) \big) + f \big( \phi(t) \big), 
\end{equation}
where $f \big( \phi(t) \big) := g \big(\phi(t) \big) + (\theta_0 - \hat{\theta}_{\textup{LIP}} )^T T_{\textup{phy}} \big( \phi(t) \big)$ denotes the \emph{unmodelled dynamics}. 

\begin{lemma}
\label{le:DifferenceUnknownUnmodelledDynamics}
	The \emph{unknown dynamics} $g \big( \phi(t) \big)$ are identical to the \emph{unmodelled dynamics} $f \big( \phi(t) \big)$, if and only if $T_{\textup{phy}} \big( \phi(t) \big)$ and $g \big( \phi(t) \big)$ are uncorrelated. 
\end{lemma}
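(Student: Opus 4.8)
The plan is to reduce the equivalence to a single identity relating the parameter error $\theta_0 - \hat{\theta}_{\textup{LIP}}$ to the correlation between $T_{\textup{phy}}$ and $g$. Starting from the definition of the unmodelled dynamics, $f \big( \phi(t) \big) = g \big( \phi(t) \big) + (\theta_0 - \hat{\theta}_{\textup{LIP}})^T T_{\textup{phy}} \big( \phi(t) \big)$, I observe that $f$ and $g$ coincide over the data set exactly when the bilinear term $(\theta_0 - \hat{\theta}_{\textup{LIP}})^T T_{\textup{phy}} \big( \phi(t) \big)$ vanishes for all $t \in Z^N$. The whole statement therefore hinges on characterizing when this term is zero.

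First I would substitute the inverse dynamics~\eqref{eq:InverseDynamics}, i.e.\ $u(t) = \theta_0^T T_{\textup{phy}} \big( \phi(t) \big) + g \big( \phi(t) \big)$, into the closed--form LIP optimum~\eqref{eq:LIPGlobalOptimum}. Using that $\theta_0^T T_{\textup{phy}} \big( \phi(t) \big) = T_{\textup{phy}} \big( \phi(t) \big)^T \theta_0$ is a scalar and recognizing the definition of $M$, the corresponding term collapses to $M^{-1} M \theta_0 = \theta_0$, which leaves the key identity
\[
	\theta_0 - \hat{\theta}_{\textup{LIP}} = - M^{-1} \, \frac{1}{N} \sum_{t \in Z^N} T_{\textup{phy}} \big( \phi(t) \big) g \big( \phi(t) \big).
\]

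With this identity the two implications follow quickly. For the ``if'' direction, uncorrelatedness of $T_{\textup{phy}}$ and $g$ in the sense of Definition~\ref{def:Uncorrelated} makes the sum on the right--hand side vanish, so $\hat{\theta}_{\textup{LIP}} = \theta_0$ and hence $f \big( \phi(t) \big) = g \big( \phi(t) \big)$ identically. For the ``only if'' direction, $f = g$ forces $(\theta_0 - \hat{\theta}_{\textup{LIP}})^T T_{\textup{phy}} \big( \phi(t) \big) = 0$ for every $t$; substituting the key identity, multiplying by $T_{\textup{phy}} \big( \phi(t) \big)^T$, averaging over the data set, and cancelling the factor $M^{-1} M$ forces the correlation vector to equal zero, which is precisely uncorrelatedness.

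The main subtlety, rather than a genuine obstacle, is that the ``only if'' direction relies on $M$ being non--singular --- the same condition under which~\eqref{eq:LIPGlobalOptimum} is well defined and unique. Without it, $\hat{\theta}_{\textup{LIP}}$ is not uniquely determined and the final cancellation step fails, so I would state this invertibility assumption explicitly at the start of the proof.
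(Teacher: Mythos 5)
Your proof is correct and follows essentially the same route as the paper's: substitute the inverse dynamics into the closed--form LIP optimum to obtain $\theta_0 - \hat{\theta}_{\textup{LIP}} = -M^{-1}\,\frac{1}{N}\sum_{t \in Z^N} T_{\textup{phy}}\big(\phi(t)\big) g\big(\phi(t)\big)$ and conclude from non--singularity of $M$ (your sign is in fact the correct one; the paper's displayed identity omits the minus, which is immaterial since the statement only concerns vanishing). Your additional care in the ``only if'' direction --- reducing $f=g$ on the data set to the vanishing of the correlation vector by multiplying by $T_{\textup{phy}}\big(\phi(t)\big)^T$ and averaging --- is a minor refinement of the paper's more direct claim that identity of $f$ and $g$ is equivalent to $\theta_0 = \hat{\theta}_{\textup{LIP}}$, and rests on the same invertibility of $M$ that the paper also invokes.
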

\begin{proof}
	The \emph{unknown} and \emph{unmodelled} dynamics are identical if and only if $\theta_0 - \hat{\theta}_{\textup{LIP}} =0$. 
	Using the solution $\hat{\theta}_{\textup{LIP}}$~\eqref{eq:LIPGlobalOptimum} and the inverse dynamics~\eqref{eq:InverseDynamics} gives
	\begin{align}
	\begin{split}
	\label{eq:Proof1Step1}
		\theta_0 - \hat{\theta}_{\textup{LIP}} & = M^{-1} M \theta_0 - M^{-1}\left[ \frac{1}{N} \sum_{t \in Z^N} T_{\textup{phy}} \big( \phi(t) \big) u(t) \right] \\
		& = M^{-1} \left[ \frac{1}{N} \sum_{t \in Z^N} T_{\textup{phy}} \big( \phi(t) \big) g \big( \phi(t) \big) \right],
	\end{split}
	\end{align}
	which proves Lemma~\ref{le:DifferenceUnknownUnmodelledDynamics}, since $M$ is non--singular. 
\end{proof}

\section{Problem statement}
\label{sec:ProblemStatement}
\subsection{Inverse system identification using the PGNN model}
Identification of the inverse system dynamics~\eqref{eq:InverseDynamics} using the LIP model~\eqref{eq:LIP} leaves room for improvement, as there is still the unmodelled dynamics $f \big( \phi(t) \big)$ to be identified. 
Therefore, it was proposed in \cite{Bolderman2021} to use the PGNN model structure~\eqref{eq:PGNN} for identification of the inverse system dynamics. 
Due to the parallel layer structure of the PGNN, if the cost function~\eqref{eq:MSE} is used for identification, there is no guarantee that the resulting parameters for the physical layer correspond to the parameters of the identified LIP model~\eqref{eq:LIP} of the original system~\eqref{eq:InverseDynamics}. In fact, the NN part of the PGNN~\eqref{eq:PGNN} can start to compete with the physical model. 

Minimizing the cost~\eqref{eq:MSE} solely aims to optimize the data fit of the PGNN and does not attribute specific dynamics to the neural layer versus the physics--guided layer. 
It would be desirable to steer the training of the PGNN such that the physics--guided layer identifies the known physical dynamics and the NN layer identifies the remaining unmodelled dynamics.
This is a well known problem in nonlinear system identification when models with a parallel structure are used, see \cite{Nelles2001}, Chapter~$21$. 
Therein, a solution is proposed which is based on first identifying the known physical dynamics, and then training the NN using the residuals. 
This preserves the known dynamics in the physical part of the PGNN~\eqref{eq:PGNN}, but also limits achievable data fitting accuracy, as the whole set of parameters for the PGNN are not identified simultaneously.

\subsection{Illustrative example of layer competition}
In order to illustrate the competition between the NN and physics--guided layer in the PGNN~\eqref{eq:PGNN}, we consider the coreless linear motor (CLM) also used in \cite{Bolderman2021}. 
A reasonably accurate LIP model can be obtained via first--principle modelling using Newton's second law, which gives
\begin{equation}
\label{eq:CLMDynamics}
	u(t) = \begin{bmatrix} m & f_v & f_c & f_k \end{bmatrix} \begin{bmatrix} \delta^2 y(t) \\ \delta y(t) \\ \textup{sign} \big( \delta y(t) \big) \\ y(t) \end{bmatrix} + g \big( \phi(t) \big),
\end{equation}
where $\delta = \frac{1-q^{-1}}{T_s}$ is the backward Euler differentiation, with $q^{-1}$ the backwards--shift operator, and $T_s$ the sampling time. 
Also, $\phi(t) = [ y(t), y(t-1), y(t-2) ]^T$, and $m, f_v, f_c, f_k \in \mathbb{R}_+$ are the mass, viscous friction coefficient, Coulomb friction coefficient, and stiffness coefficient, respectively.

\begin{figure}
	\begin{subfigure}{1\linewidth}
	\centering
	\includegraphics[width=1\linewidth]{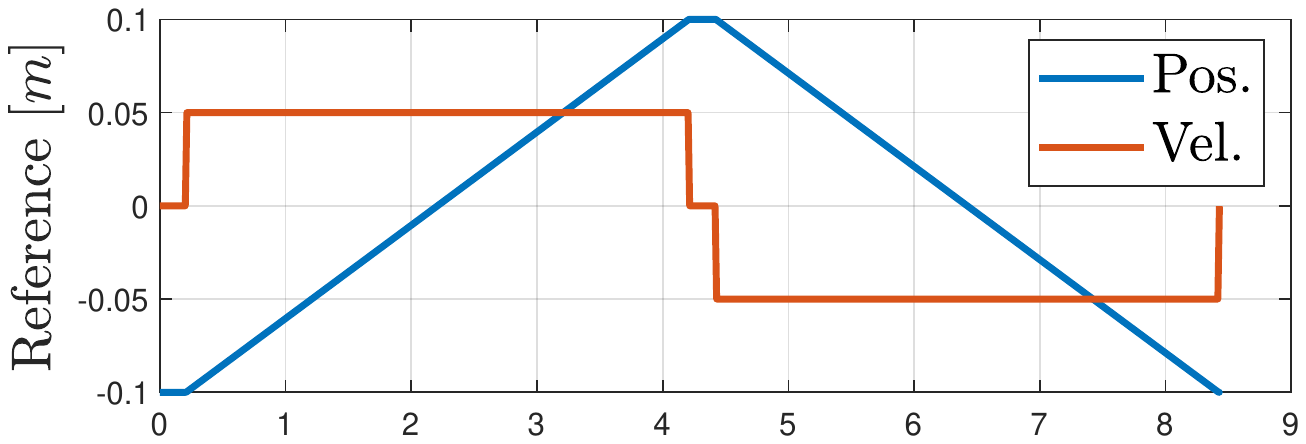}
	\end{subfigure}\hfill
	\begin{subfigure}{1\linewidth}
	\centering
	\includegraphics[width=1\linewidth]{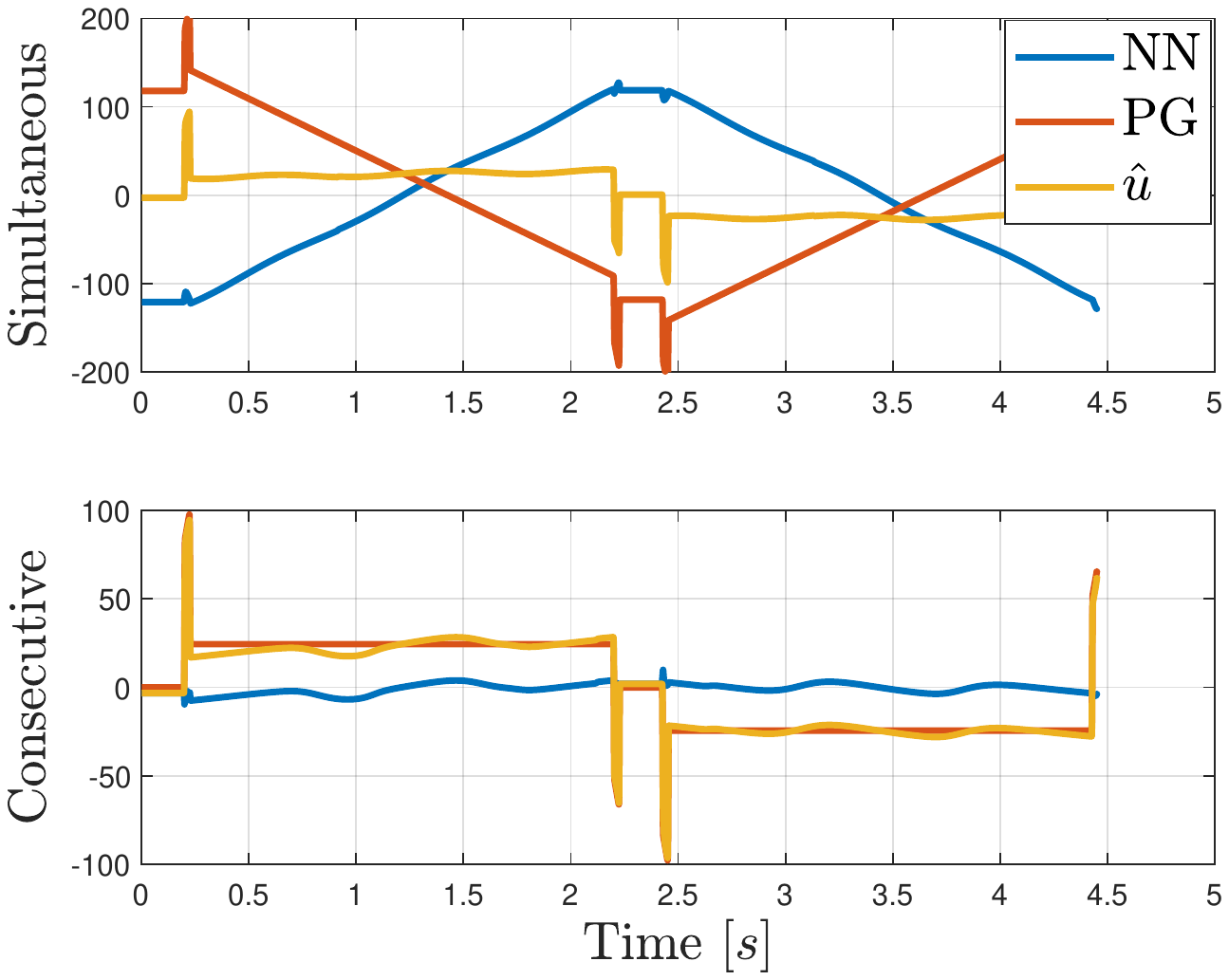}
	\end{subfigure}
	\caption{Generated feedforward signals resulting from the reference (top), for a PGNN~\eqref{eq:PGNN} where the NN and physics--guided (PG) layer are trained either simultaneously (middle), or sequentially (bottom). }
	\label{fig:ExampleFeedforward}
\end{figure}

Identification using only the known, LIP model in~\eqref{eq:CLMDynamics} gives $\hat{\theta}_{\textup{LIP}} = [18.8, 172, 7.21, 1.36\cdot 10^{-8}]^T$, whereas the corresponding parameters in the PGNN~\eqref{eq:CLMDynamics} converged to $\hat{\theta}_{\textup{phy}} = [18.2, 301, -4.99, 1.18\cdot 10^3 ]^T$. 
The results are obtained using a single hidden layer NN with $16$ neurons that have $\tanh (\cdot)$ activation functions. 
This parameter drift during training has two main disadvantages.

First of all, the parameter drift results in an inconsistent physics--guided layer in the PGNN~\eqref{eq:PGNN}. 
This complicates validation of the PGNN and thereby hinders applicability to safety critical systems.
Secondly, extrapolation capabilities of the PGNN to operating conditions not present in the training data are severely limited. 
This is caused by the large output of the NN layer, and uninterpretable physics--guided layer, as is shown in Figure~\ref{fig:ExampleFeedforward}.
Although the NN has universal approximation capabilities to identify the dynamics in the domain for which training data is generated, it is well known that its extrapolation capabilites are limited \cite{Haley1992}. 
Therefore, it is desired to have the NN layer output small, while the parameters of the physics--guided layer of the PGNN remain close to the physical parameters.

\subsection{Problem formulation}
In order to solve the aforementioned problems, in this paper we propose a regularization--based approach for simultaneously training the PGNN~\eqref{eq:PGNN}. 
Under the condition that PGNN training converges to a global optimum, the regularized training allow for the following:
\begin{enumerate}
	\item When the NN layer can identify the \emph{unmodelled dynamics}, the physics--guided layer must become the LIP, and the NN must identify the \emph{unmodelled dynamics}. 
	\item When the NN cannot identify the \emph{unmodelled dynamics}, the physics--guided layer parameters can still be partially used to improve data--fit. 
\end{enumerate}

Since training of the PGNN~\eqref{eq:PGNN} is a non--convex optimization, it is possible that the optimization scheme ends up in a local minimum. 
Therefore, the second problem considered in this paper is defined as the design of a PGNN parameter initialization method that returns a smaller cost function compared to the LIP model. This property must also be satisfied for the regularized PGNN cost function.

\section{Regularized PGNN training}
In order to have the physical parameters $\theta_{\textup{phy}}$ of the PGNN~\eqref{eq:PGNN} remain close to the LIP parameters $\hat{\theta}_{\textup{LIP}}$~\eqref{eq:LIPGlobalOptimum}, the regularized PGNN cost function is defined. 
\label{sec:Regularization}
\begin{definition}
	The regularized PGNN training cost function is defined as
	\begin{align}
	\begin{split}
	\label{eq:RegularizedCostFunction}
		V & \big( \hat{u} \big( \theta_{\textup{PGNN}}, \phi(t) \big), Z^N \big) = \frac{1}{N} \sum_{t \in Z^N} \big(u(t) - \hat{u} \big( \theta_{\textup{PGNN}}, \phi(t) \big) \big)^2 \\
		& \quad \quad + ( \theta_{\textup{phy}} - \hat{\theta}_{\textup{LIP}} )^T \Lambda (\theta_{\textup{phy}} - \hat{\theta}_{\textup{LIP}}), 
	\end{split}
	\end{align}
	where $\Lambda$ is a symmetric positive definite matrix that defines the relative weights for penalizing deviation from the LIP. 
\end{definition}
\begin{remark}
	The regularized PGNN training cost function~\eqref{eq:RegularizedCostFunction} is fundamentally different from the PINNs regularization
	\begin{align}
	\begin{split}
	\label{eq:PINNsRegularization}
		V  & \big( \hat{u} \big( \theta_{\textup{NN}}, \phi(t) \big), Z^N \big) = \frac{1}{N} \sum_{t \in Z^N} \big(u(t) - \hat{u} \big( \theta_{\textup{NN}}, \phi(t) \big) \big)^2 \\
		& + \lambda \frac{1}{N} \sum_{t \in Z^N} \left( \hat{u} \big( \theta_{\textup{NN}} , \phi(t) \big) - \theta_{\textup{phy}}^T T_{\textup{phy}} \big( \phi(t) \big) \right)^2 ,
	\end{split}
	\end{align}
	with $\lambda \in \mathbb{R}_+$ the relative weight of the regularization. 
	Training of the PINNs can be done either by optimizing only $\theta_{\textup{NN}}$ and fixing $\theta_{\textup{phy}} = \hat{\theta}_{\textup{LIP}}$, or by simultaneously training $\{\theta_{\textup{NN}}, \theta_{\textup{phy}}\}$. 
	Clearly, it is not possibly to have both contributions in~\eqref{eq:PINNsRegularization} become zero when there is unknown dynamics, see $g\big( \phi(t) \big)$ in~\eqref{eq:InverseDynamics}. 
	This implies that the PINNs approach \cite{Karpatne2017, Karpatne2019, Karniadakis2019} must optimize between either data fit or physical model compliance. 
	Straightforwardly, this creates a bias in the estimation of $\hat{\theta}_{\textup{NN}}$. 
\end{remark}

It can be proven that the PGNN regularization~\eqref{eq:RegularizedCostFunction} does not create a bias under the following assumptions.

\begin{assumption}
\label{as:SufficientNNPGNN}
	The NN in the PGNN~\eqref{eq:PGNN} is chosen sufficiently rich to identify the \emph{unmodelled dynamics}, i.e., there exists a $\theta_{\textup{NN}}^*$ such that
	\begin{equation}
	\label{eq:SufficientNNPGNN}
		f_{\textup{NN}} \big( \theta_{\textup{NN}}^*, \phi(t) \big) = f \big( \phi(t) \big). 
	\end{equation}
	This $\theta_{\textup{NN}}^*$ is unique up until the level of interchanging neurons in the same layer. 
\end{assumption}
\begin{assumption}
\label{as:PersistenceOfExcitationNNPGNN}
	The data set $Z^N$~\eqref{eq:DataSet} is persistently exciting with respect to the NN part of the PGNN~\eqref{eq:PGNN}, such that
	\begin{equation}
	\label{eq:PersistenceOfExcitationNNPGNN}
		\frac{1}{N} \sum_{t \in Z^N} \left( f_{\textup{NN}} \big( \theta_{\textup{NN}}, \phi(t) \big) - f \big( \phi(t) \big) \right)^2 = 0 \; \Rightarrow \; \theta_{\textup{NN}} = \theta_{\textup{NN}}^*.
	\end{equation}
\end{assumption}

\begin{proposition}
\label{prop:PGNNFitting}
	Consider the data generating system~\eqref{eq:InverseDynamics}, the PGNN~\eqref{eq:PGNN}, and the identification criterion~\eqref{eq:IdentificationCriterion} with regularized PGNN cost function~\eqref{eq:RegularizedCostFunction}. 
	Suppose that Assumptions~\ref{as:SufficientNNPGNN} and~\ref{as:PersistenceOfExcitationNNPGNN} hold, and let $\hat{\theta}_{\textup{LIP}}$ be calculated as in~\eqref{eq:LIPGlobalOptimum}. 
	Assume that the minimization of the cost function~\eqref{eq:RegularizedCostFunction} converges to a global optimum. 
	Then, the corresponding optimal parameters satisfy $\hat{\theta}_{\textup{PGNN}} = \{ \hat{\theta}_{\textup{LIP}}, \theta_{\textup{NN}}^* \}$.
\end{proposition}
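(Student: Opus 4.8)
The plan is to exhibit the candidate point $\{ \hat{\theta}_{\textup{LIP}}, \theta_{\textup{NN}}^* \}$ as a global minimizer by showing that it drives the regularized cost~\eqref{eq:RegularizedCostFunction} to zero, and then to argue that no other point can achieve this. First I would observe that both terms of~\eqref{eq:RegularizedCostFunction} are manifestly nonnegative: the data--fit term as a sum of squares, and the regularization term because $\Lambda$ is symmetric positive definite. Hence $V \geq 0$, so any point attaining $V = 0$ is automatically a global optimum, and characterizing the global optima reduces to characterizing the zeros of $V$.

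The key step is to evaluate the residual $u(t) - \hat{u} \big( \theta_{\textup{PGNN}}, \phi(t) \big)$ at $\theta_{\textup{phy}} = \hat{\theta}_{\textup{LIP}}$. Here I would substitute the decomposition~\eqref{eq:InverseDynamicsLIP}, which rewrites $u(t)$ as $\hat{\theta}_{\textup{LIP}}^T T_{\textup{phy}} \big( \phi(t) \big) + f \big( \phi(t) \big)$. With $\theta_{\textup{phy}} = \hat{\theta}_{\textup{LIP}}$ the two physics--guided contributions in $u(t)$ and in $\hat{u}$ cancel exactly, leaving the residual equal to $f \big( \phi(t) \big) - f_{\textup{NN}} \big( \theta_{\textup{NN}}, \phi(t) \big)$. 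Choosing $\theta_{\textup{NN}} = \theta_{\textup{NN}}^*$ and invoking Assumption~\ref{as:SufficientNNPGNN} makes this residual vanish for every $t$; since the regularization term also vanishes at $\theta_{\textup{phy}} = \hat{\theta}_{\textup{LIP}}$, the full cost is zero. This establishes that $\{ \hat{\theta}_{\textup{LIP}}, \theta_{\textup{NN}}^* \}$ attains the global minimum.

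For the converse, I would take an arbitrary global optimum and use $V = 0$ together with the nonnegativity of the two terms to conclude that each term vanishes separately. Vanishing of the regularization term, combined with positive definiteness of $\Lambda$, forces $\theta_{\textup{phy}} = \hat{\theta}_{\textup{LIP}}$. Substituting this back into the (now zero) data--fit term and again using~\eqref{eq:InverseDynamicsLIP} reduces it to $\frac{1}{N} \sum_{t \in Z^N} \big( f_{\textup{NN}} \big( \theta_{\textup{NN}}, \phi(t) \big) - f \big( \phi(t) \big) \big)^2 = 0$, at which point Assumption~\ref{as:PersistenceOfExcitationNNPGNN} delivers $\theta_{\textup{NN}} = \theta_{\textup{NN}}^*$. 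Together these identify the global optimum as $\{ \hat{\theta}_{\textup{LIP}}, \theta_{\textup{NN}}^* \}$.

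The argument is structurally straightforward and I do not anticipate a serious obstacle; the one point requiring care is the clean cancellation of the physics--guided terms, which works precisely because the regularization in~\eqref{eq:RegularizedCostFunction} is centered at the LIP optimum $\hat{\theta}_{\textup{LIP}}$ rather than at any other reference value. The only conceptual subtlety is that the ``uniqueness'' of $\theta_{\textup{NN}}^*$ must be read modulo the neuron--interchange ambiguity already noted in Assumption~\ref{as:SufficientNNPGNN}, so the conclusion is properly an equality of the induced input--output maps rather than a literal equality of weight vectors.
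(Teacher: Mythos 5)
Your proposal is correct and follows essentially the same route as the paper: write the regularized cost as a sum of two nonnegative terms, show that $\{\hat{\theta}_{\textup{LIP}}, \theta_{\textup{NN}}^*\}$ drives both to zero via the decomposition~\eqref{eq:InverseDynamicsLIP} and Assumption~\ref{as:SufficientNNPGNN}, and use positive definiteness of $\Lambda$ together with Assumption~\ref{as:PersistenceOfExcitationNNPGNN} for the converse. If anything, your write--up makes the uniqueness direction more explicit than the paper's own two--line argument.
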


\begin{proof}
	Writing out the regularized cost function~\eqref{eq:RegularizedCostFunction} and substituting the PGNN~\eqref{eq:PGNN} gives
	\begin{align}
	\begin{split}
	\label{eq:Proof2Step1}
		&V \big( \hat{u} \big( \theta_{\textup{PGNN}}, \phi(t)\big), Z^N \big) = \\
		&\frac{1}{N} \sum_{t \in Z^N} \left( u(t) - \theta_{\textup{phy}}^T T_{\textup{phy}} \big( \phi(t) \big) - f_{\textup{NN}} \big( \theta_{\textup{NN}}, \phi(t) \big) \right)^2 + \\
		& \quad \quad ( \theta_{\textup{phy}}^T - \hat{\theta}_{\textup{LIP}} )^T \Lambda ( \theta_{\textup{phy}} - \hat{\theta}_{\textup{LIP}} ) \geq 0.  
	\end{split}
	\end{align}
	Both terms in~\eqref{eq:Proof2Step1} are non--negative. Correspondingly, the global optimum is attained if $\hat{\theta}_{\textup{phy}} = \hat{\theta}_{\textup{LIP}}$ (regularization term), and $\hat{\theta}_{\textup{NN}} = \theta_{\textup{NN}}^*$ (data--fit term, substitute $u(t)$ from~\eqref{eq:InverseDynamics}). 
\end{proof}

\begin{remark}
	It is in general difficult to know the required NN dimensions (number of hidden layers, number of neurons per hidden layer) for Assumption~\ref{as:SufficientNNPGNN} to be satisfied. 
	In the situation that Assumption~\ref{as:SufficientNNPGNN} is violated, the training of the PGNN~\eqref{eq:PGNN} with regularized cost function~\eqref{eq:RegularizedCostFunction} allows the physics--guided layer parameters to slightly deviate from the LIP in order to better fit the data. 
\end{remark}

\section{PGNN training initialization}
\label{sec:Initialization}
Proposition~\ref{prop:PGNNFitting} holds under the assumption that training of the PGNN~\eqref{eq:PGNN} converges to the global minimum of the cost function~\eqref{eq:RegularizedCostFunction}. 
The non--convexity of the optimization however, requires the use of nonlinear optimization schemes that tend to get stuck in local minima, which strongly depends on the initialized parameters \cite{Nocedal2006}. 
For this reason, there is a strong interest in finding initial parameter values that already perform well on the cost function~\eqref{eq:RegularizedCostFunction}. 

In order to illustrate the proposed PGNN parameter initialization, we denote $\theta_{\textup{PGNN}}^{(k)}$ as the value of $\theta_{\textup{PGNN}}$ at the $k$'th iteration in training, and rewrite the PGNN output~\eqref{eq:PGNN} as
\begin{align}
\begin{split}
\label{eq:PGNNOutputRewritten}
	\hat{u} \big( \theta_{\textup{PGNN}} , \phi(t) \big) = & W_{l+1}^T f_{\textup{HL}} \big( \theta_{\textup{HL}}, \phi(t) \big) \\
	& + B_{l+1} + \theta_{\textup{phy}}^T T_{\textup{phy}} \big( \phi(t) \big),
\end{split}
\end{align}
where $\theta_{\textup{HL}} := \{ W_1, B_1, \hdots, W_l, B_l \}$, such that $\theta_{\textup{NN}} = \{ \theta_{\textup{HL}}, W_{l+1}, B_{l+1} \}$.
Also, $f_{\textup{HL}} : \mathbb{R}^{n_a+ n_b + n_c + 1} \rightarrow \mathbb{R}^{n_l}$ denotes the output of the last hidden layer $l \in \mathbb{Z}_+$ that contains $n_l\in \mathbb{Z}_+$ neurons.
In contrary to the majority of NN literature that initializes all the weights and biases $\theta_{\textup{NN}}^{(0)}$ randomly, we only do so for the hidden layer weights and biases $\theta_{\textup{HL}}^{(0)}$, and define $\phi_{\textup{OL}}^{(0)}(t) := [ f_{\textup{HL}} \big( \theta_{\textup{HL}}^{(0)}, \phi(t) \big)^T, 1, T_{\textup{phy}} \big( \phi(t) \big)^T]^T$. 
Then, under the following assumption, we prove that the PGNN can be initialized at a lower regularized cost function~\eqref{eq:RegularizedCostFunction} compared to the LIP model~\eqref{eq:LIP}. 

\begin{assumption}
\label{as:PerstenceOfExcitationOutputParameters}
	The data set $Z^N$~\eqref{eq:DataSet} is persistently exciting for the output weights and biases of the PGNN~\eqref{eq:PGNN}, i.e.,
	\begin{equation}
	\label{eq:PersistenceOfExcitationOutputParameters}
		M_{\textup{R}} : = \frac{1}{N} \sum_{t \in Z^N} \phi_{\textup{OL}}^{(0)} (t) {\phi_{\textup{OL}}^{(0)}(t)}^T + \begin{bmatrix} 0 \\ 0 \\ I \end{bmatrix} \Lambda \begin{bmatrix} 0 \\ 0 \\ I \end{bmatrix}^T
	\end{equation}
	is non--singular. 
\end{assumption}

\begin{lemma}
\label{le:LowerCostFunction}
	Consider the PGNN~\eqref{eq:PGNN} with parameters $\theta_{\textup{HL}}^{(0)}$ initialized randomly, and the regularized cost function~\eqref{eq:RegularizedCostFunction}. 
	Suppose that Assumption~\ref{as:PerstenceOfExcitationOutputParameters} holds. 
	Define $\theta_{\textup{OL}} := [W_{l+1}^T , B_{l+1}, \theta_{\textup{phy}}^T ]^T$ and initialize as
	\begin{equation}
	\label{eq:PGNNInitializationParameters}
		\theta_{\textup{OL}}^{(0)} = M_{\textup{R}}^{-1} \left[ \frac{1}{N} \sum_{t \in Z^N} u(t) \phi_{\textup{OL}}^{(0)} (t) + \begin{bmatrix} 0 \\ 0 \\ I \end{bmatrix} \Lambda \hat{\theta}_{\textup{LIP}} \right]. 
	\end{equation}
	Then, it holds that
	\begin{equation}
	\label{eq:LowerCostFunction}
		V \left( \hat{u} \big( \theta_{\textup{PGNN}}^{(0)}, \phi(t) \big), Z^N \right) \leq V \left( \hat{u} \big( \hat{\theta}_{\textup{LIP}}, \phi(t) \big), Z^N \right) ,
	\end{equation}
	with strict inequality if and only if
	\begin{align}
	\begin{split}
	\label{eq:LowerCostFunctionCondition}
		M_{\textup{R}} \begin{bmatrix} 0 \\ 0 \\ \hat{\theta}_{\textup{LIP}} \end{bmatrix} - \left( \frac{1}{N} \sum_{t \in Z^N} u(t) \phi_{\textup{OL}}^{(0)} (t) + \hat{\theta}_{\textup{LIP}}^T \Lambda \begin{bmatrix} 0 \\ 0 \\ I \end{bmatrix}^T \right) \neq 0.
	\end{split}
	\end{align}
\end{lemma}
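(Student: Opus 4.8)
The plan is to exploit the fact that, once the hidden-layer parameters $\theta_{\textup{HL}}^{(0)}$ are frozen at their random initialization, the PGNN output in the rewritten form~\eqref{eq:PGNNOutputRewritten} becomes \emph{linear} in $\theta_{\textup{OL}}$, namely $\hat{u}\big(\theta_{\textup{PGNN}}^{(0)}, \phi(t)\big) = \theta_{\textup{OL}}^T \phi_{\textup{OL}}^{(0)}(t)$. Substituting this into the regularized cost~\eqref{eq:RegularizedCostFunction} turns the cost into a convex quadratic in $\theta_{\textup{OL}}$. Here I would rewrite the regularization term $(\theta_{\textup{phy}} - \hat{\theta}_{\textup{LIP}})^T \Lambda (\theta_{\textup{phy}} - \hat{\theta}_{\textup{LIP}})$ using the embedding $\theta_{\textup{phy}} = \begin{bmatrix} 0 & 0 & I \end{bmatrix} \theta_{\textup{OL}}$, so that it contributes exactly the block $\begin{bmatrix} 0 \\ 0 \\ I \end{bmatrix} \Lambda \begin{bmatrix} 0 \\ 0 \\ I \end{bmatrix}^T$ to the Hessian and the vector $\begin{bmatrix} 0 \\ 0 \\ I \end{bmatrix} \Lambda \hat{\theta}_{\textup{LIP}}$ to the linear part. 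The resulting Hessian equals $2 M_{\textup{R}}$, with $M_{\textup{R}}$ as in Assumption~\ref{as:PerstenceOfExcitationOutputParameters}.

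Next, I would compute the gradient of this quadratic with respect to $\theta_{\textup{OL}}$, set it to zero, and verify that the stationarity equation reads $M_{\textup{R}} \theta_{\textup{OL}} = \frac{1}{N} \sum_{t \in Z^N} u(t) \phi_{\textup{OL}}^{(0)}(t) + \begin{bmatrix} 0 \\ 0 \\ I \end{bmatrix} \Lambda \hat{\theta}_{\textup{LIP}}$. By Assumption~\ref{as:PerstenceOfExcitationOutputParameters}, $M_{\textup{R}}$ is non-singular, and being symmetric positive definite it makes the quadratic strictly convex, so this equation has the unique solution~\eqref{eq:PGNNInitializationParameters}. Hence the proposed initialization $\theta_{\textup{OL}}^{(0)}$ is the exact global minimizer of the regularized cost over the output-layer parameters with the random hidden layer held fixed.

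The key observation for the inequality is that, whatever the random hidden layer is, the PGNN can always reproduce the LIP: choosing $W_{l+1} = 0$, $B_{l+1} = 0$, and $\theta_{\textup{phy}} = \hat{\theta}_{\textup{LIP}}$ makes the NN contribution vanish so that $\hat{u} = \hat{\theta}_{\textup{LIP}}^T T_{\textup{phy}}\big(\phi(t)\big)$, and simultaneously annihilates the regularization term. Thus at this particular feasible $\theta_{\textup{OL}} = \begin{bmatrix} 0, 0, \hat{\theta}_{\textup{LIP}}^T \end{bmatrix}^T$ the regularized cost equals the plain LIP cost, i.e.\ the right-hand side of~\eqref{eq:LowerCostFunction}. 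Since $\theta_{\textup{OL}}^{(0)}$ is the global minimizer, its cost is no larger than the cost at this feasible point, which proves~\eqref{eq:LowerCostFunction}.

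Finally, for the strict-inequality claim I would invoke strict convexity. Equality in~\eqref{eq:LowerCostFunction} holds if and only if the LIP point $\begin{bmatrix} 0, 0, \hat{\theta}_{\textup{LIP}}^T \end{bmatrix}^T$ is itself the unique minimizer, which happens exactly when the gradient vanishes there. Evaluating the stationarity residual $M_{\textup{R}} \theta_{\textup{OL}} - \big( \frac{1}{N} \sum_{t \in Z^N} u(t) \phi_{\textup{OL}}^{(0)}(t) + \begin{bmatrix} 0 \\ 0 \\ I \end{bmatrix} \Lambda \hat{\theta}_{\textup{LIP}} \big)$ at this point reproduces condition~\eqref{eq:LowerCostFunctionCondition}, so strict inequality holds precisely when that residual is nonzero. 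I expect the only delicate part to be the bookkeeping with the embedding matrix $\begin{bmatrix} 0 \\ 0 \\ I \end{bmatrix}$ --- ensuring the regularizer on $\theta_{\textup{phy}}$ lands in the correct block of $M_{\textup{R}}$ and that the residual at the LIP point matches~\eqref{eq:LowerCostFunctionCondition} (up to the apparently transposed notation used there); the convex-optimization core is otherwise routine.
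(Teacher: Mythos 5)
Your proposal is correct and follows essentially the same route as the paper's proof: freeze the random hidden layer so the cost becomes a strictly convex quadratic in $\theta_{\textup{OL}}$ with Hessian $2M_{\textup{R}}$, identify \eqref{eq:PGNNInitializationParameters} as its unique global minimizer, compare against the feasible point $[0,0,\hat{\theta}_{\textup{LIP}}^T]^T$ which reproduces the LIP cost exactly, and characterize equality via vanishing of the gradient at that point. Your remark about the transposed notation in \eqref{eq:LowerCostFunctionCondition} correctly flags a typographical inconsistency in the paper rather than a gap in your argument.
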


\begin{proof}
	Rewriting the regularized cost function~\eqref{eq:RegularizedCostFunction} gives
	\begin{align}
	\begin{split}
	\label{eq:Proof3Step1}
		V &\big( \theta_{\textup{OL}}^T \phi_{\textup{OL}}^{(0)} , Z^N \big) = \frac{1}{N} \sum_{t \in Z^N} u(t)^2 + \hat{\theta}_{\textup{LIP}}^T \Lambda \hat{\theta}_{\textup{LIP}} + \theta_{\textup{OL}}^T M_{\textup{R}} \theta_{\textup{OL}}  \\
		& - 2 \left( \frac{1}{N} \sum_{t \in Z^N} u(t) {\phi_{\textup{OL}}^{(0)}}^T + \hat{\theta}_{\textup{LIP}}^T \Lambda \begin{bmatrix} 0 & 0 & I \end{bmatrix} \right) \theta_{\textup{OL}},
	\end{split}
	\end{align}
	that has the global optimum $\theta_{\textup{OL}}^{(0)}$ in~\eqref{eq:PGNNInitializationParameters}, which is unique due to Assumption~\ref{as:PerstenceOfExcitationOutputParameters}. 
	Correspondingly, we can lowerbound the regularized cost function~\eqref{eq:RegularizedCostFunction} according to
	\begin{align}
	\begin{split}
	\label{eq:Proof3Step2}
		V &  \left( \hat{u} \big( \theta_{\textup{PGNN}}^{(0)}, \phi(t) \big) , Z^N \right) = V \left( {\theta_{\textup{OL}}^{(0)}}^T \phi_{\textup{OL}}^{(0)} (t) , Z^N \right) \\
		& \quad \leq V \left( \begin{bmatrix} 0 & 0 & \hat{\theta}_{\textup{LIP}} \end{bmatrix}^T \phi_{\textup{OL}}^{(0)}(t), Z^N \right) \\
		&  \quad = V \left( \hat{u} \big( \hat{\theta}_{\textup{LIP}}, \phi(t) \big) , Z^N \right).
	\end{split}
	\end{align}
	In~\eqref{eq:Proof3Step2}, the inequality holds with equality if and only if $\frac{\partial V \big( \theta_{\textup{OL}}^T \phi_{\textup{OL}}^{(0)} (t) , Z^N \big)}{\partial \theta_{\textup{OL}} } \big|_{\theta_{\textup{OL}} = [0, 0, \hat{\theta}_{\textup{LIP}}^T ]^T} = 0$. 
	If~\eqref{eq:LowerCostFunctionCondition} holds, this equality is not satisfied and we have~\eqref{eq:LowerCostFunction} with strict inequality. 
\end{proof}

After initialization of the parameters $\theta_{\textup{PGNN}}^{(0)} = \{ \theta_{\textup{OL}}^{(0)}, \theta_{\textup{HL}}^{(0)} \}$ as in Lemma~\ref{le:LowerCostFunction}, training $\theta_{\textup{PGNN}}$ is performed using a nonlinear optimization scheme that returns $\{\theta_{\textup{PGNN}}^{(0)}, \hdots, \theta_{\textup{PGNN}}^{(k)} \}$, with $k$ the number of iterations for the solver to converge or stop. 
Then, choosing $\hat{\theta}_{\textup{PGNN}}$ as the iteration with the smallest cost function, concludes that
\begin{align}
\begin{split}
\label{eq:LowerCostFunctionFinal}
	V \big( \hat{u} \big( \hat{\theta}_{\textup{PGNN}}, \phi(t) \big), Z^N \big) & \leq V \big( \hat{u} \big( \theta_{\textup{PGNN}}^{(0)}, \phi(t) \big), Z^N \big) \\
	& < V \big( \hat{u} \big( \hat{\theta}_{\textup{LIP}}, \phi(t) \big), Z^N \big),
\end{split}
\end{align}
if condition~\eqref{eq:LowerCostFunctionCondition} holds. 

\begin{remark}
	Beneficial of the proposed PGNN~\eqref{eq:PGNN} is that Lemma~\ref{le:LowerCostFunction} does not depend on the dimensions of the NN part of the PGNN. 
	This is in contrast to the PINNs approach, where there are no clear guidelines for the design of the NN.
\end{remark}

\begin{figure}
	\begin{subfigure}{1\linewidth}
	\centering
	\includegraphics[width=1\linewidth]{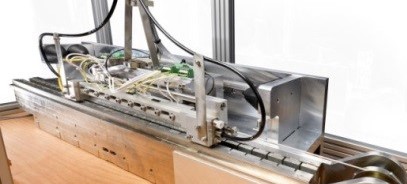}
	\end{subfigure}\hfill
	\begin{subfigure}{1\linewidth}
	\centering
	\includegraphics[width=1\linewidth]{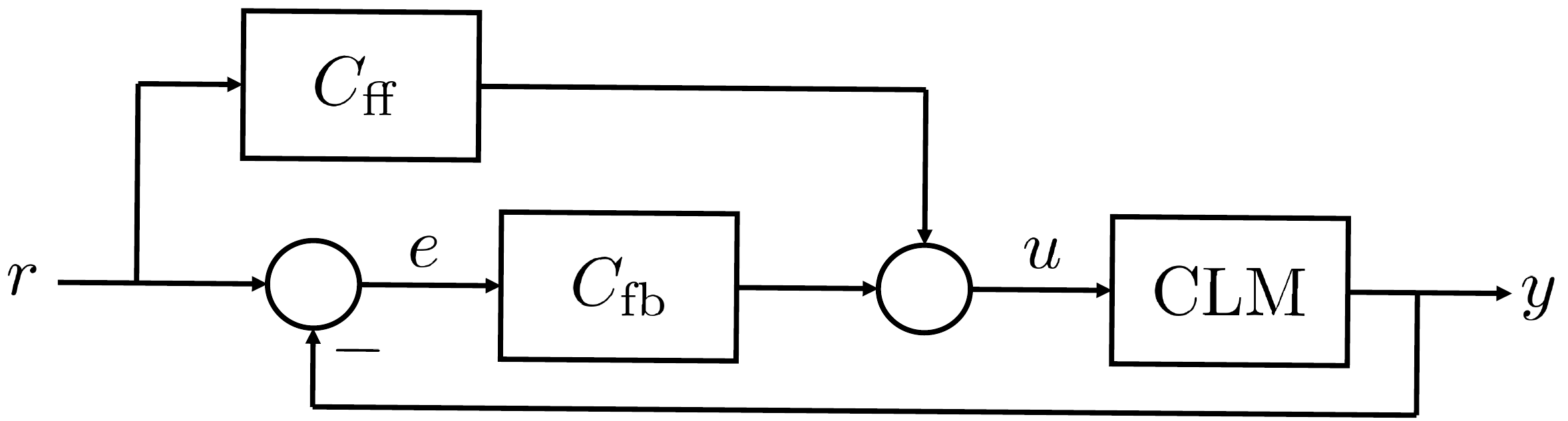}
	\end{subfigure}
	\caption{Experimental CLM setup, and a schematic overview of the closed--loop control structure.}
	\label{fig:CLM}
\end{figure}

\section{Experimental validation}
\label{sec:Example}
To illustrate the effectiveness of the regularized PGNN~\eqref{eq:PGNN} for feedforward control as defined in \eqref{eq:Feedforward}, we consider the problem of position control for a real--life coreless linear motor (CLM) shown in Figure~\ref{fig:CLM}.
The system is controlled in closed--loop by a feedback controller proposed in~\cite{Bolderman2021}. 
Training data is generated while operating the CLM in closed--loop at a frequency of $10$ $kHz$, with a third order nominal reference $r_1(t) := \{ r(0), \hdots, r(N_R-1) \}$ that moves back--and--forth in $r_1(t) \in \{ -0.1, 0.1\}$ $m$, with maximum velocity $\max(| \dot{r}_1(t) |) = 0.05$ $\frac{m}{s}$, acceleration $\max(| \ddot{r}_1(t) |) = 4$ $\frac{m}{s^2}$, and jerk $\max(| \dddot{r}_1(t) |) = 1000$ $\frac{m}{s^3}$. Additionally, the input to the system $u(t)$ is dithered with a normally distributed white noise $\Delta u(t) \sim \mathcal{N} (0, 50^2 )$.

Following the previously introduced CLM dynamics~\eqref{eq:CLMDynamics}, the PGNN is defined as
\begin{equation}
\label{eq:PGNNsCLM}
	\hat{u} \big( \theta_{\textup{PGNN}}, \phi_{\textup{CLM}}(t) \big) = \theta_{\textup{phy}}^T \phi_{\textup{CLM}}(t) + f_{\textup{NN}} \big(\theta_{\textup{NN}}, \phi_{\textup{CLM}}(t) \big),
\end{equation}
with $\phi_{\textup{CLM}} := [\delta^2 y(t), \delta y(t), \textup{sign} \big( \delta y(t) \big), y(t) ]^T$. 
The transformed inputs enter the NN, as this was shown to enhance convergence of the PGNN training for high sampling rates in \cite{Bolderman2021}. 
Moreover, the NN part of the PGNNs~\eqref{eq:PGNNsCLM} have a single hidden layer with $n_l = 16$ neurons that have $\tanh (\cdot)$ activation function. 

Figure~\ref{fig:RegularizationEffect} shows the effect of the regularization using different values for $\Lambda = \lambda I$ on the training convergence for both the data fit and the LIP fit~\eqref{eq:RegularizedCostFunction}. 
There is a trend observable where a smaller $\lambda$ allows the physics--guided parameters $\theta_{\textup{phy}}$ to deviate more from the $\hat{\theta}_{\textup{LIP}}$ in order to improve the data fit.

\begin{figure}
    \centering
    \includegraphics[width=1\linewidth]{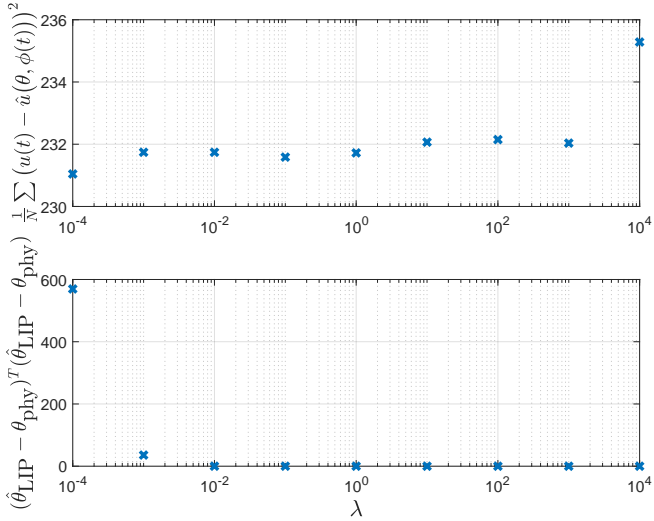}
    \caption{Effect of the regularization $\Lambda = \lambda I$~\eqref{eq:RegularizedCostFunction} on the training convergence for the data fit (top) and LIP fit (bottom). }
    \label{fig:RegularizationEffect}
\end{figure}

Figure \ref{fig:TrackingErrorA} shows the tracking error $e(t) := r(t)-y(t)$ resulting from the nominal reference $r_1(t)$ for the PGNNs~\eqref{eq:PGNN} that are trained either sequentially, or with regularized cost function~\eqref{eq:RegularizedCostFunction} using $\Lambda = \lambda I$ with $\lambda = 0$ or $\lambda = 0.01$. 
There is no major difference between the different PGNNs in terms of tracking performance, as is also confirmed by the data in Table~\ref{tab:Performance} that lists the mean--absolute error
\begin{equation}
    \label{eq:MAE}
    \textup{MAE} \big( e(t) \big) := \frac{1}{N_R} \sum_{t \in Z^R} \left| r(t) - y(t) \right|.
\end{equation}

Figure~\ref{fig:TrackingErrorB} shows the tracking error resulting from a reference $r_2(t)$ that oscillates between $r_2(t) \in \{ 0, 0.17 \}$ $m$, using the same bounds on velocity, acceleration, and jerk as $r_1(t)$.
The regularization helps to keep performance loss limited when operating the PGNN feedforwards on conditions not present in the training data.
This is also confirmed by the MAE shown in Table~\ref{tab:Performance}.
On the other hand, it is mainly due to the feedback controller that the MAE for $\lambda =0$ does not become larger on this experiment. 

Additionally, Table~\ref{tab:Performance} shows the results for $n_l = 8$ hidden layer neurons. 
For this situation, in which the NN part of the PGNN has limited approximation capabilities, it is clear that the regularized PGNN significantly outperforms the sequentially (consecutively) trained PGNN.
Currently, what limits the measurable tracking performance is the $0.5 \cdot 10^{-5}$ $m$ resolution of the linear encoder with which position measurements $y(t)$ are taken.

\begin{figure}
	\begin{subfigure}{1\linewidth}
	\centering
	\includegraphics[width=1\linewidth]{Figures/ReferenceA.pdf}
	\end{subfigure}\hfill
	\begin{subfigure}{1\linewidth}
	\centering
	\includegraphics[width=1\linewidth]{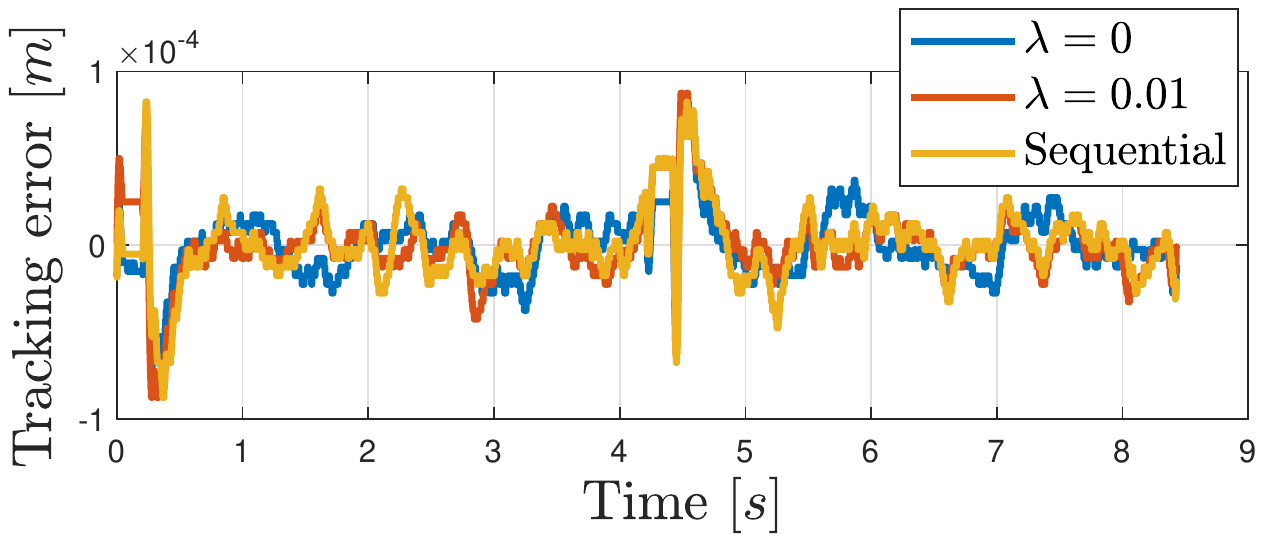}
	\end{subfigure}
	\caption{Tracking error on the nominal reference $r_1(t)$ used for training data generation.}
	\label{fig:TrackingErrorA}
\end{figure}

\begin{figure}
	\begin{subfigure}{1\linewidth}
	\centering
	\includegraphics[width=1\linewidth]{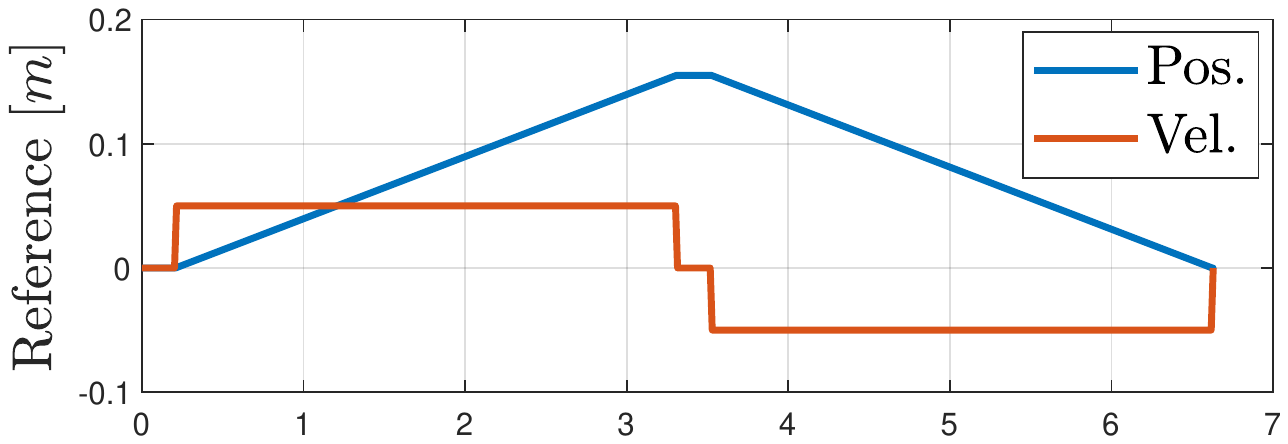}
	\end{subfigure}\hfill
	\begin{subfigure}{1\linewidth}
	\centering
	\includegraphics[width=1\linewidth]{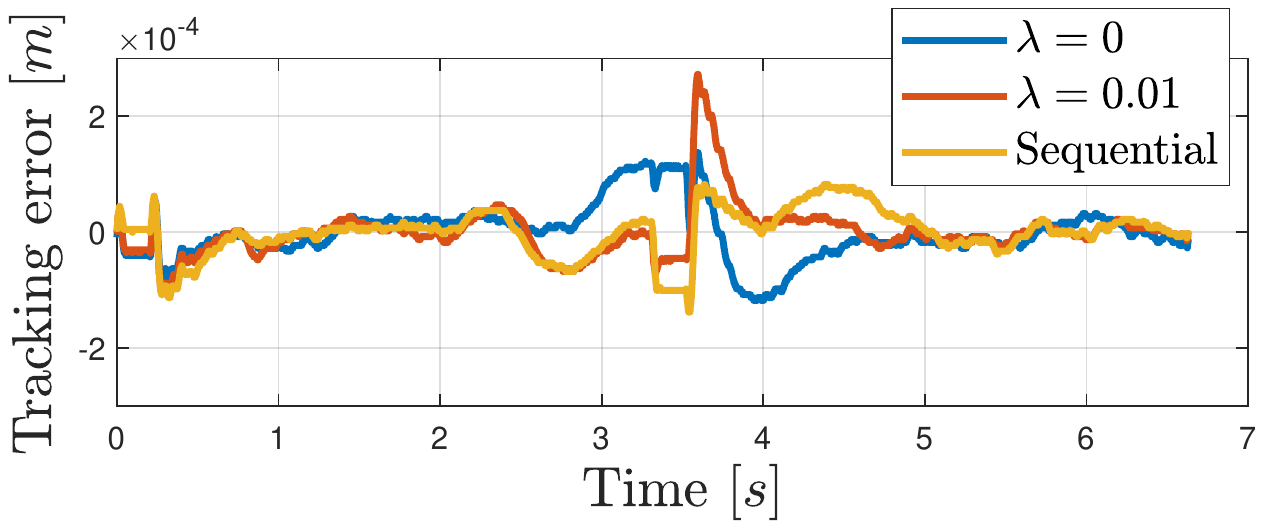}
	\end{subfigure}
	\caption{Tracking error for the reference $r_2(t)$ that attains positions not present in the training data.}
	\label{fig:TrackingErrorB}
\end{figure}

\begin{table}
\caption{MAE of the tracking error $e(t)$ for the nominal reference $r_1(t)$ and new reference $r_2(t)$. }
\label{tab:Performance}
\includegraphics[width=1.0\linewidth]{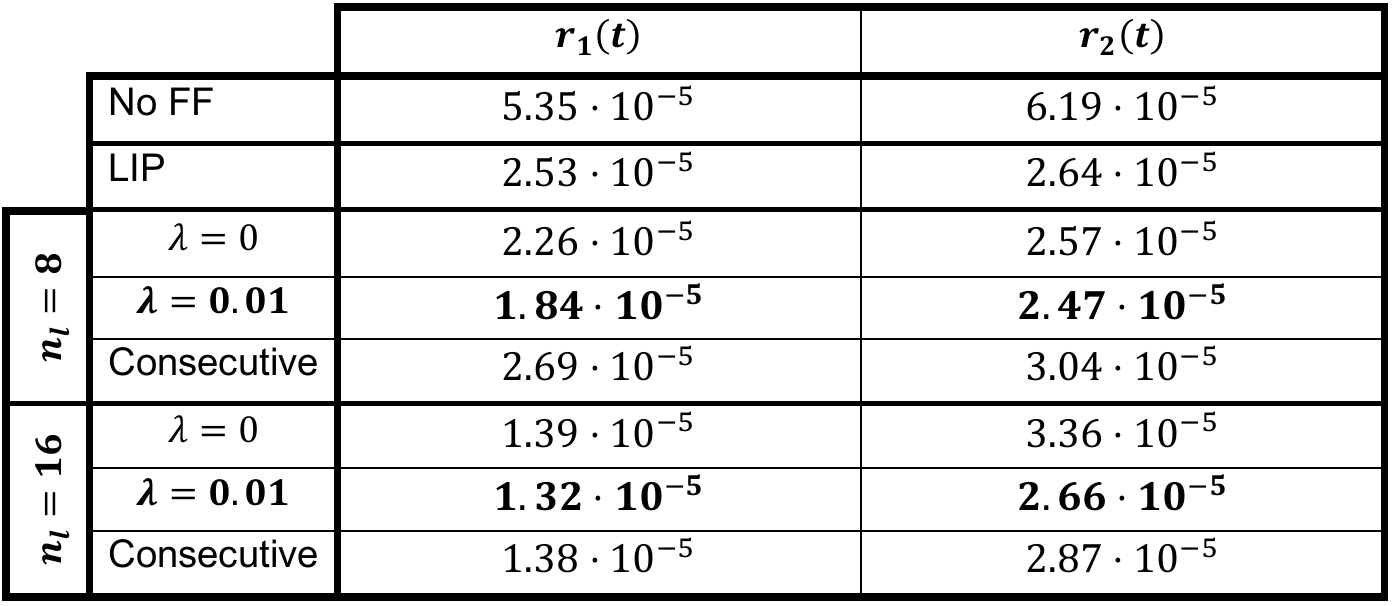}
\end{table}

\section{Conclusion}
\label{sec:Conclusion}
In this paper, a regularization--based PGNN feedforward control framework was introduced for enabling a trade--off between high data fitting accuracy of NNs and the good extrapolation properties of physical models.
An optimized initialization method was developed for the considered PGNN with a parellel structure, which has improved training convergence compared to a corresponding linear--in--the--parameter model. 
Experimental validation showed that developed regularization of the training cost is capable of optimizing the trade--off between data fit and physics--model based extrapolation.

\bibliographystyle{IEEEtran}
\bibliography{IEEEabrv,References}

% Generated by IEEEtran.bst, version: 1.14 (2015/08/26)
\begin{thebibliography}{10}
\providecommand{\url}[1]{#1}
\csname url@samestyle\endcsname
\providecommand{\newblock}{\relax}
\providecommand{\bibinfo}[2]{#2}
\providecommand{\BIBentrySTDinterwordspacing}{\spaceskip=0pt\relax}
\providecommand{\BIBentryALTinterwordstretchfactor}{4}
\providecommand{\BIBentryALTinterwordspacing}{\spaceskip=\fontdimen2\font plus
\BIBentryALTinterwordstretchfactor\fontdimen3\font minus
  \fontdimen4\font\relax}
\providecommand{\BIBforeignlanguage}[2]{{%
\expandafter\ifx\csname l@#1\endcsname\relax
\typeout{** WARNING: IEEEtran.bst: No hyphenation pattern has been}%
\typeout{** loaded for the language `#1'. Using the pattern for}%
\typeout{** the default language instead.}%
\else
\language=\csname l@#1\endcsname
\fi
#2}}
\providecommand{\BIBdecl}{\relax}
\BIBdecl

\bibitem{Boerlage2003}
M.~L.~G. Boerlage, M.~Steinbuch, P.~F. Lambrechts, and M.~M.~J. van~de Wal,
  ``Model--based feedforward for motion systems,'' \emph{IEEE International
  Conference on Control Applications}, vol.~2, pp. 1158--1163, 2003.

\bibitem{Butterworth2012}
J.~A. Butterworth, L.~Y. Pao, and D.~Y. Abramovitch, ``Analysis and comparison
  of three discrete--time feedforward model--inverse control techniques for
  nonminimum--phase systems,'' \emph{Mechatronics"}, vol.~22, no.~5, pp.
  577--587, 2012.

\bibitem{Yuen2019}
Y.~H. Yuen, M.~Lazar, and H.~Butler, ``Data-driven neural feedforward
  controller design for industrial linear motors,'' \emph{23rd International
  Conference on System Theory, Control and Computing}, pp. 461--467, 2019.

\bibitem{Zundert2018}
J.~van Zundert and T.~Oomen, ``On inversion-based approaches for feedforward
  and {ILC},'' \emph{Mechatronics}, vol.~50, pp. 282--291, 2018.

\bibitem{Igarashi2021}
K.~Igarashi, R.~Igarashi, T.~Atsumi, and S.~Nakadai, ``Feedforward control for
  track--seeking control in hard disk drive with sampled--data polynomial based
  on first--order hold,'' \emph{{IEEE} International Conference on
  Mechatronics}, 2021.

\bibitem{Steinbuch2000}
M.~Steinbuch and R.~M. J.~G. van~de Molengraft, ``Iterative learning control of
  industrial motion systems,'' \emph{IFAC Proceedings Volumes}, vol.~33,
  no.~26, pp. 899--904, 2000.

\bibitem{Devasia2002}
S.~Devasia, ``Should model--based inverse inputs be used as feedforward under
  plant uncertainty?'' \emph{IEEE Transactions on Automatic Control}, vol.~47,
  pp. 1865--1871, 2002.

\bibitem{Sorensen1999}
O.~S{\o}rensen, ``Additive feedforward control with neural networks,''
  \emph{{IFAC} Proceedings Volumes}, vol.~32, no.~2, pp. 1378--1383, 1999.

\bibitem{Hornik1989}
K.~Hornik, M.~B. Stinchcombe, and H.~White, ``Multilayer feedforward networks
  are universal approximators,'' \emph{Neural Networks}, vol.~2, no.~5, pp.
  359--366, 1989.

\bibitem{Cybenko1989}
G.~Cybenko, ``Approximation by superpositions of a sigmoidal function,''
  \emph{Mathematics of Control, Signals and Systems}, vol.~2, pp. 303--314,
  1989.

\bibitem{Otten1997}
G.~Otten, T.~J.~A. de~Vries, J.~van Amerongen, A.~M. Rankers, and E.~W. Gaal,
  ``Linear motor motion control using a learning feedforward controller,''
  \emph{{IEEE} Transactions on Mechatronics}, vol.~2, no.~3, pp. 179--187,
  1997.

\bibitem{Ren2009}
X.~Ren, Y.~L. Chow, V.~Venkataramanan, and F.~L. Lewis, ``Feedforward control
  based on neural networks for disturbance rejection in hard disk drives,''
  \emph{IET Control Theory \& Applications}, vol.~3, no.~4, pp. 411--418, 2009.

\bibitem{Li2014}
G.~Li, J.~Na, D.~P. Stoten, and X.~Ren, ``Adaptive neural network feedforward
  control for dynamically substructured systems,'' \emph{{IEEE} Transactions on
  Control Systems Technology}, vol.~22, no.~3, pp. 944--954, 2014.

\bibitem{Haley1992}
P.~J. Haley and D.~Soloway, ``Extrapolation limitations of multilayer
  feedforward neural networks,'' \emph{in Proceedings of Internation Joint
  Conference on Neural Networks}, vol.~4, pp. 25--30, 1992.

\bibitem{Bolderman2021}
M.~Bolderman, M.~Lazar, and H.~Butler, ``Physics--guided neural networks for
  inversion--based feedforward control applied to linear motors,'' \emph{IEEE
  Conference on Control Technology and Applications}, 2021.

\bibitem{Nelles2001}
O.~Nelles, \emph{Nonlinear System Identification}.\hskip 1em plus 0.5em minus
  0.4em\relax Springer, 2001.

\bibitem{Yasui1996}
T.~Yasui, A.~Moran, and M.~Hayase, ``Integration of linear systems and neural
  networks for identification and control of nonlinear systems,'' \emph{{IFAC}
  Proceedings Volumes}, vol.~30, no.~11, pp. 783--788, 1997.

\bibitem{Zabiri2011}
H.~Zabiri, M.~Ramasamy, T.~D. Lemma, and A.~Maulud, ``Nonlinear system
  identification using integrated linear--nn models: serives vs. parallel
  structures,'' \emph{International Conference on Modeling, Simulation, and
  Control}, vol.~10, pp. 33--37, 2011.

\bibitem{Schoukens2020}
M.~Schoukens and R.~T\'oth, ``On the initialization of nonlinear {LFR} model
  identification with the best linear approximation,'' \emph{{IFAC}
  PapersOnline}, vol.~53, no.~2, pp. 310--315, 2020.

\bibitem{Karpatne2017}
A.~Karpatne, W.~Watkins, J.~Read, and V.~Kumar, ``Physics-guided neural
  networks ({PGNN}): An application in lake temperature modeling,'' \emph{arXiv
  preprint arXiv:1710.11431}, 2017.

\bibitem{Karpatne2019}
A.~Daw, R.~Q. Thomas, C.~C. Carey, J.~S. Read, A.~P. Appling, and A.~Karpatne,
  ``Physics--guided architecture ({PGA}) of neural networks for quantifying
  uncertainty in lake temperature modeling.'' \emph{arXiv preprint
  arXiv:1911.02682}, 2019.

\bibitem{Karniadakis2019}
M.~Raissi, P.~Perdikaris, and G.~E. Karniadakis, ``Physics--informed neural
  networks: A deep learning framework for solving forward and inverse problems
  involving nonlinear partial differential equations,'' \emph{Journal of
  Computation Physics}, vol. 378, pp. 686--707, 2019.

\bibitem{Pintelon2012}
R.~Pintelon and J.~Schoukens, \emph{System identification: A frequency domain
  approach}.\hskip 1em plus 0.5em minus 0.4em\relax Wiley--{IEEE} Press, 2012.

\bibitem{Nocedal2006}
J.~Nocedal and S.~Wright, \emph{Numerical optimization}.\hskip 1em plus 0.5em
  minus 0.4em\relax Springer Science \& Business Media, 2006.

\end{thebibliography}

\end{document}